\newtheorem{theorem}{Theorem}
\newtheorem{assumption}{Assumption}
\newcommand{\mdp}{MDP}
\newcommand{\pomdp}{POMDP}
\newcommand{\rbpomdpLong}{Reference-Based POMDP}
\newcommand{\rbpomdp}{RBPOMDP}
\newcommand{\nosLong}{Partially Observable Reference Policy Programming\xspace} 
\newcommand{\nos}{PORPP\xspace} 
\newcommand{\Reals}{\mathbb{R}} 
\newcommand{\Prob}{P} 
\newcommand{\KL}{\operatornamewithlimits{KL}} 
\newcommand{\argmax}{\operatornamewithlimits{arg\,max}} 
\newcommand{\argmin}{\operatornamewithlimits{arg\,min}} 
\newcommand{\dist}{\rho} 
\newcommand{\hist}{\ensuremath{h}} 
\newcommand{\bel}{\ensuremath{b}}
\newcommand{\belSpace}{\ensuremath{\mathcal{B}}} 
\newcommand{\reachable}{\ensuremath{\mathcal{R}}} 
\newcommand{\optReachable}{\ensuremath{\mathcal{R}^*}} 
\newcommand{\policies}{\ensuremath{\Pi}} 
\newcommand{\sampler}{\ensuremath{\tilde{\pi}}} 
\newcommand{\subsetBA}{\ensuremath{\Omega}} 
\newcommand{\States}{\ensuremath{\mathcal{S}}} 
\newcommand{\Actions}{\mathcal{A}} 
\newcommand{\Observations}{\mathcal{O}} 
\newcommand{\TP}{\mathcal{T}} 
\newcommand{\OP}{\mathcal{Z}} 
\newcommand{\Reward}{R} 
\newcommand{\sta}{\ensuremath{{s}}} 
\newcommand{\nst}{\ensuremath{{s'}}} 
\newcommand{\act}{\ensuremath{{a}}} 
\newcommand{\mact}{\ensuremath{\textbf{a}}} 
\newcommand{\obs}{\ensuremath{{o}}} 
\newcommand{\mobs}{\ensuremath{\textbf{o}}} 
\newcommand{\discount}{{\gamma}} 
\newcommand{\initBel}{{\bel_0}} 
\newcommand{\QVal}{Q} 
\newcommand{\optQVal}{\QVal^*} 
\newcommand{\pol}{{\pi}} 
\newcommand{\optPol}{{\pol}^*} 
\newcommand{{\VF}}{\mathscr{V}} 
\newcommand{{\QF}}{\mathscr{Q}} 
\newcommand{\Rmax}{\Reward_{\max}} 
\newcommand{\Vmax}{\QVal_{\max}} 
\newcommand{\covNum}{\mathcal{C}} 
\newcommand{\covNumInt}{\mathcal{C}^{\circ}} 
\newcommand{\pakNum}{\mathcal{P}} 
\newcommand{\covSet}{\mathcal{E}} 
\newcommand{\nearestBel}{\tilde{\tau}} 
\newcommand{\DPP}{\mathscr{L}} 
\newcommand{\logOp}{\mathcal{L}} 
\newcommand{\error}{\epsilon} 
\newcommand{\refPol}{\pol_0} 
\newcommand{\refVal}{\mathcal{V}} 
\newcommand{\temp}{\eta} 
\newcommand{\actPref}{\Psi} 
\newcommand{\approxAP}{\hat{\actPref}} 
\newcommand{\approxPol}{\hat{\pol}} 
\newcommand{\optRefPol}{{\pol}^*} 
\newcommand{\Tree}{{T}} 
\newcommand{\depth}{\text{depth}}
\newcommand{\GenModel}{\mathscr{G}}
\newcommand{\maxDepth}{D_\text{max}}
\newcommand{\AWF}{{\kappa_{\Actions}}} 
\newcommand{\AWA}{{\alpha_{\Actions}}} 
\title{\nosLong:\\ Solving POMDPs Sans Numerical Optimisation\footnote{Technical details and proofs are contained in the Supplementary Material (\url{https://github.com/RDLLab/pomdp-py-porpp}).}}
\author{
Edward Kim, and
Hanna Kurniawati\\
Australian National University\\
\{edward.kim, hanna.kurniawati\}@anu.edu.au
}
\begin{document}

\maketitle

\begin{abstract}
    This paper proposes \nosLong, a novel anytime online approximate \pomdp~solver which samples meaningful future histories very deeply while simultaneously forcing a gradual policy update. We provide theoretical guarantees for the algorithm's underlying scheme which say that the performance loss is bounded by the \emph{average} of the sampling approximation errors rather than the usual maximum; a crucial requirement given the sampling sparsity of online planning. Empirical evaluations on two large-scale problems with dynamically evolving environments---including a helicopter emergency scenario in the Corsica region requiring approximately 150 planning steps---corroborate the theoretical results and indicate that our solver considerably outperforms current online benchmarks.
\end{abstract}

\section{Introduction}

Planning under uncertainty in non-deterministic and partially observable scenarios is critical for many (semi-)autonomous systems.
Such problems can be systematically framed as a Partially Observable Markov Decision Process (POMDP) \cite{kaebling98}.
Although solving infinite-horizon POMDPs in the worst case is undecidable \cite{madani}, the past decade has seen tremendous advances in the practicality of approximate POMDP solvers \cite{kurniawatiSurvey}. Most of these solvers are online sampling-based methods that numerically compute estimates of the expected total reward of performing different actions before optimising over these estimates. Due to difficulties in computing gradients, such solvers \emph{exhaustively enumerate} over the entire action space, which massively hinders fast computation of a close-to-optimal solution for problems with large action spaces and long horizons. This problem is even worse when the environment is also dynamically changing at each execution step.

The core difficulty is the curse of history where the set of possible futures branches by the size of the action space and grows exponentially with respect to the horizon.
Most existing methods try to abstract the problem into a simpler one by either reducing the size of the action space~\cite{wang1} or relying on \emph{macro actions}---i.e. a set of open-loop action sequences---to reduce the planning horizon~\cite{theo2003,he10:puma,KDHL2011,FRF2020,lee.rss}.
Still, the fundamental problem---i.e. exhaustive action enumeration---remains.

Recently, \cite{kkk23,lktkkk24} have softened this requirement by introducing the notion of a Reference-Based \pomdp~(\rbpomdp) which is a reformulation of a \pomdp~whose objective is penalised by the Kullback-Leibler (KL) divergence between a chosen and nominal \emph{reference policy}.
As such, a solution can be viewed as a KL-regularised improvement of the reference policy.
The form of objective allows analytical action optimisation so that the value can be approximated by estimating expectations under the reference policy.
This property accommodates solvers that have been shown to perform effectively on certain long-horizon tasks.
However, the \rbpomdp~formulation comes at the cost that the solution has a baked-in commitment to the reference policy.
In general, it is unclear a priori which reference policies yield near optimal policies for the original \pomdp~of interest, so the performance of the computed solution is vulnerable to mis-specification.

The aim of this paper is to build on the advantages of the \rbpomdp~framework while, in tandem, bolstering any vulnerabilities to mis-specification. To this end, our contribution is
an exact iterative scheme (Sect. \ref{subsec.exact.scheme}) whose successive policies can be viewed as solutions of a sequence of \rbpomdp s---i.e. KL-constrained policy improvements.
Theoretical analysis shows that the performance loss of the exact scheme is bounded by the \emph{average} of the sampling errors, which means the algorithm is less sensitive to large approximation errors (Theorem \ref{thm.approx.bound}).
We also contribute an explicit approximate scheme (Sect. \ref{subsec.approx.scheme}) and provide a \pomdp-specialised high-probability bound for the performance loss (Theorem \ref{thm.exp.bd}).
Finally, the scheme is practically implemented in our proposed algorithm \nosLong~(\nos)---an anytime online \pomdp-solver---and tested on two non-trivial long-horizon \pomdp s, one of which has a dynamically evolving environment.
Experimental results indicate that \nos~substantially outperforms current state-of-the-art online \pomdp~benchmarks.

\section{Background and Related Work}

\subsection{\pomdp~Preliminaries}
\label{sec.pomdp}

An infinite-horizon \pomdp~is defined as the tuple $\langle \States, \Actions, \Observations, \OP, \TP, \Reward, \discount, \initBel \rangle$ where the sets of all possible agent states, actions and observations are denoted by $\States$, $\Actions$ and $\Observations$ respectively.
For clarity, our presentation is for countable sets.
The \emph{transition model} $\TP$ is such that $\TP(\nst \, | \, \act, \sta)$ is the conditional probability that $\nst \in \States$ occurs after performing $\act \in \Actions$ from $\sta \in \States$.
The \emph{observation model} $\OP$ is such that $\OP(\obs \, | \, \nst, \act)$ is the conditional probability that the agent perceives $\obs \in \Observations$ when it is in state $\nst \in \States$ after performing $\act \in \Actions$.
The \emph{reward model} is a real-valued function $\Reward: \States \times \Actions \rightarrow \Reals$ such that $| \Reward(\sta, \act) | \le \Rmax < \infty$ for all $\sta, \act$ and $\discount \in (0, 1)$ is the discount factor.

The agent does not know the true state, but it maintains a \emph{belief} about its state---i.e. a probability distribution $\bel$ on the space $\States$.
Let $\belSpace$ be the set of all such distributions.
Starting with a given initial belief $\initBel$, the agent's next belief $\bel'$ after taking the action $\act$ and perceiving some observation $\obs$ is given by
$
    \bel'(\nst) \propto \OP(\obs \, | \, \nst, \act) \sum_{\sta \in \States}  \TP(\nst \, | \, \act, \sta) \bel(\sta)
$
and we write $\bel' = \tau(\bel, \act, \obs)$ with the \emph{belief update operator} $\tau$.
We denote the set of \emph{reachable beliefs} by $\reachable_\initBel \subset \belSpace$; i.e. the set of beliefs reachable from $\initBel$ under some policy.
For any given belief $\bel$ and action $\act$ the expected immediate reward is given by
$
    \Reward(\bel, \act) := \sum_{\States} \Reward(\sta, \act) \bel(\sta)
$.
The probability that the agent perceives an observation $\obs \in \Observations$ having performed the action $\act \in \Actions$ under the belief $\bel$ is given by
\begin{equation}
    \Prob(\obs \, | \, \act, \bel) := \sum_{\nst \in \States} \OP(\obs \, | \, \nst, \act) \sum_{\sta \in \States} \TP(\nst \, | \, \act, \sta) \bel(\sta).
\end{equation}

A (stochastic) \emph{policy} is a mapping $\pol: \reachable_\initBel \rightarrow \Delta(\Actions)$. We denote its distribution for any given input $\bel \in \reachable_\initBel$ by $\pol(\cdot \, | \, \bel)$.
Let $\policies$ be the class of all stochastic policies.
For any $(\bel, \act) \in \reachable_\initBel \times \Actions$, define the \emph{action-value function} $\QVal^\pol: \reachable_\initBel \times \Actions \rightarrow \Reals$ recursively according to
\begin{multline} \label{eq.qval}
        \QVal^\pol(\bel, \act) = \Reward(\bel, \act) \\
        + \discount \sum_{\act', \obs} \QVal^\pol\big(\tau(\bel, \act, \obs), \act'\big) \Prob(\obs \, | \, \act', \bel) \, \pol(\act' \, | \, \bel).
\end{multline}
Given the reward is uniformly bounded, for any policy $\pol \in \policies$, we have $|\QVal^\pol(\bel, \act)| \le \Vmax := \Rmax/(1-\discount)$ for all pairs $(\bel, \act) \in \reachable_\initBel \times \Actions$.
A \emph{solution} to the \pomdp~is a policy $\optPol \in \policies$ satisfying
$
    \optQVal(\bel, \act) := \sup_{\pol \in \policies} \QVal^\pol(\bel, \act) = \QVal^{\optPol}(\bel, \act)
$
for all $(\bel, \act) \in \reachable_\initBel \times \Actions$.

\subsection{\pomdp~Packing and Covering Numbers}
\label{sec.pakcov}

For a Markov Decision Process (\mdp)~with finite state and action spaces, the usual input for complexity is the set cardinality $|\States| |\Actions|$ where it is generally assumed that the spaces are finite.
However, for the \pomdp, the reachable belief space $\reachable_\initBel$ is an uncountable subset even if $\States$ is finite so the notion of set cardinality is no longer a sensible complexity input.
A more reasonable approach is to choose a metric in $\Reals^{|\States|}$, and estimate a ``finite volume'' of $\reachable_\initBel$ via the dual concepts of a \emph{$\delta$-packing} or \emph{$\delta$-covering number}.
While these are theoretical quantities, they can be explicitly computed in certain cases and highlight key properties relating to the \pomdp's complexity~\cite{easypomdp}.

The interested reader can refer to Sect. 1 of the Supplementary Material for a more thorough review of their formal definitions and properties.
In words, the \emph{$\delta$-covering number} $\covNum_{\delta}(\reachable_\initBel)$ is the minimum number of balls of radius $\delta$ needed to cover the set $\reachable_\initBel$.
If in addition, all the centres of the balls are required to belong to $\reachable_\initBel$ then we call such a number the \emph{internal $\delta$-covering number} and denote it by $\covNumInt_{\delta}(\reachable_\initBel)$.
The \emph{$\delta$-packing number} $\pakNum_{\delta}(\reachable_\initBel)$ is the maximum number of points that can be packed inside $\reachable_\initBel$ such that all points are at least $\delta$ distance apart.
The concepts are closely related and, importantly, are finite if and only if $\reachable_\initBel$ is \emph{totally bounded} (see Remark 1 in Supplementary Material).
For instance, it suffices to assume that $\States$ is finite.
To ensure the $\delta$-covering number is always finite, we will make the following standing assumption for the remainder of this paper.

\begin{assumption} \label{ass.standing}
    The reachable belief space $\reachable_\initBel$ is totally bounded.
\end{assumption}

\subsection{KL-Penalisation and \pomdp s}

The idea of using KL-penalisation in fully observable \mdp s started with a series of works on \emph{Linearly Solvable \mdp s}~\cite{todorov,todorov09,todorov09a,dvijotham2012linearly}.
The main idea is to find a control conditional distribution $p(\nst \, | \, \sta)$ to a stochastic control problem where the control cost increases with the relative entropy between $p(\cdot \, | \, \sta)$ and some benchmark $\bar{p}(\cdot \, | \, \sta)$.
The formulation results in a Bellman backup which can be optimised analytically and yields efficient methods to solve a special class of fully-controllable \mdp s.

These works were reformulated over stochastic actions by~\cite{rawlik2012stochastic} and related to general \mdp s by~\cite{azar11,azar12} who introduced \emph{Dynamic Policy Programming}.
This can be interpreted as a policy iteration scheme where each iterate $\pol_k$ is a solution to a specialised \mdp~whose reward decreases with the relative entropy $\KL(\pol_{k+1} \, \| \pol_k)$.
The scheme can be shown to converge to the solution of the \mdp; indeed, the gradual update forced by the KL-penalty yields performance bounds which depend on the \emph{average} accumulated error as opposed to the usual maximum, suggesting robustness to approximation errors.

The extension of the idea of KL-penalised \mdp s to  \pomdp s~was provided by~\cite{kkk23} who introduced the concept of a \emph{\rbpomdpLong} (\rbpomdp).
In essence, the formulation can be viewed as a \emph{Belief-\mdp}~\cite{kaebling98} with policies $U(\bel' \, | \, \bel)$ that control transitions between \pomdp~beliefs where the reward is penalised by the relative entropy $\KL \big( U(\cdot \, | \, \bel) \, \| \, \bar{U}(\cdot \, | \, \bel) \big)$ for some \emph{reference policy} $\bar{U}$.
Their empirical results suggest that approximate solvers for \rbpomdp s can outperform state-of-the-art benchmarks on large \pomdp s for certain choices of $\bar{U}(\cdot \, | \, \bel)$.
However, the authors did not provide a systematic procedure to determine this choice.
This current work addresses this gap by providing a systematic procedure, in a similar vein to \cite{azar12},  

\section{\nos}
\label{sec.algo}

\nos~is an anytime online \pomdp~solver which approximates the solution of a policy iteration scheme whose successive policies are forced to be close to each other.
Specifically, each policy iterate is a solution to a \rbpomdp~over stochastic actions whose reference policy is the previous policy in the sequence and can therefore be viewed as a KL-constrained policy improvement.
While this procedure converges more slowly, its advantage is that it yields a performance bound which is given by the \emph{average} of approximation errors, suggesting that it is less prone to over-commitment---a useful feature given the scarcity of samples generated by an online planner.
In what follows, $\|\cdot \|_\infty$ denotes the usual \emph{supremum-norm} for bounded functions.

\subsection{RBPOMDPs over Stochastic Actions}

In \cite{kkk23}, the reliance on \emph{belief-to-belief transitions} $U(\bel' \, | \, \bel)$ implicitly allows the agent to control the choice of observation, which may not be valid in general.
We will consider a more natural formulation over \emph{stochastic actions} which will form the building blocks for the required systematic procedure.
Namely, a \emph{\rbpomdp~over stochastic actions} is a tuple $\langle \States, \Actions, \Observations, \TP, \OP, \Reward, \discount, \temp, \refPol, \initBel\rangle$. Its value $\refVal$, for a given $\bel \in \reachable_\initBel$, is specified by the recursive equation
\begin{multline} \label{eq.ref.bellman}
    \refVal(\bel) = \sup_{\pol \in \policies} \Big[ \sum_{\act \in \Actions} \Reward(\bel, \act) \pol(\act \, | \, \bel) - \frac{1}{\temp} \KL( \pol \, \| \, \refPol ) \\
    + \discount \sum_{\act, \obs} \Prob(\obs \, | \, \act, \bel) \pol(\act \, | \, \bel) \refVal\big( \tau( \bel, \act, \obs) \big) \Big].
\end{multline}
Intuitively its solution is a stochastic policy that tries to respect the reference policy $\refPol$ unless deviating substantially leads to greater rewards where the trade-off is balanced by the temperature parameter $\temp > 0$.
The right-hand-side can be optimised analytically so that \eqref{eq.ref.bellman} is equivalent to
\begin{multline} \label{eq.maximised}
    \refVal(\bel) = \frac{1}{\temp} \log \Big[\sum_{\act \in \Actions} \refPol(\act \, | \, \bel) \exp \Big\{ \temp \big[ \Reward(\bel, \act) \\
    + \discount \sum_\obs \Prob(\obs \, | \, \act, \bel) \refVal\big( \tau (\bel, \act, \obs) \big) \big] \Big\} \Big].
\end{multline}
In fact, we can represent the Bellman equation \eqref{eq.maximised} in a slightly different way by introducing \emph{preferences} $\actPref$ over belief-action pairs.
More specifically, let
\begin{multline}
    \actPref(\bel, \act) := \frac{1}{\temp} \log \big( \refPol(\act \, | \, \bel) \big) + \Reward(\bel, \act) \\
    + \discount \sum_\obs \Prob(\obs \, | \, \act, \bel) \refVal\big( \tau (\bel, \act, \obs) \big).
\end{multline}
This yields
$
    \refVal(\bel) = \frac{1}{\temp} \log \Big[ \sum_\act \exp[\temp \actPref(\bel, \act)] \Big] =: [\logOp_\temp \actPref](\bel)
$
where $\logOp_\temp$ is the \emph{log-sum-exp} operator~\cite{bhh2021,al2017} and eq. \eqref{eq.maximised} stated with respect to preferences becomes
\begin{multline}
\label{eq.log.maximised}
    \actPref(\bel, \act) = \frac{1}{\temp} \log[\refPol(\act \, | \, \bel)] + \Reward(\bel, \act) \\+ \discount \sum_\obs \Prob(\obs \, | \, \act, \bel) [\logOp_\temp \actPref] \big( \tau (\bel, \act, \obs) \big).
\end{multline}
If $\actPref^*$ satisfies \eqref{eq.log.maximised}, the solution of the \rbpomdp~is
\begin{equation} \label{eq.rbpomdp.soln}
\optRefPol(\act \, | \, \bel) = \frac{\exp[\temp \actPref^*(\bel, \act) ]}{\sum_{\act'} \exp[\temp \actPref^*(\bel, \act') ]}.
\end{equation}
being the exact maximiser of \eqref{eq.ref.bellman}.

\subsection{Exact Scheme}
\label{subsec.exact.scheme}

We are now in a position to describe the exact iterative scheme that relates the \rbpomdp~to that of the standard \pomdp.
Taking inspiration from \eqref{eq.rbpomdp.soln}, the scheme implicitly represents a reference policy $\pol_k$ by maintaining \emph{action preferences} $\actPref_k: \reachable_\initBel \times \Actions \rightarrow \Reals$ according to the equation
\begin{equation} \label{eq.pol}
    \pol_{k}(\act \, | \, \bel) := \frac{\exp[\temp \actPref_{k}(\bel, \act) ]}{\sum_{\act'} \exp[\temp \actPref_{k}(\bel, \act') ]}.
\end{equation}
The policy is then updated \emph{gradually} by asserting that $\pol_{k+1}$ is the solution to a \rbpomdp~whose reference policy is $\pol_k$. That is,
\begin{multline} \label{eq.scheme}
    \actPref_{k+1}(\bel, \act) = \frac{1}{\temp} \log[\pol_k(\act \, | \, \bel)] + \Reward(\bel, \act) \\+ \discount \sum_\obs \Prob(\obs \, | \, \act, \bel) [\logOp_\temp \actPref_k] \big( \tau (\bel, \act, \obs) \big) \\
    = \actPref_k(\bel, \act) - [\logOp_\temp \actPref_k](\bel) + \Reward(\bel, \act) \\ + \discount \sum_\obs \Prob(\obs \, | \, \act, \bel) [\logOp_\temp \actPref_k]\big( \tau(\bel, \act, \obs) \big)\\
    =: [\DPP_\temp \actPref_k] (\bel, \act).
\end{multline}
The exact scheme indeed converges to the action-value $\optQVal$ of the \pomdp.
To show this, let $\DPP_\temp$ be the exact function operator defined by \eqref{eq.scheme} and consider a sequence of \emph{approximate preferences} $(\approxAP_k)_{k \ge 0}$ such that $\approxAP_{k+1} \approx \DPP_\temp \approxAP_k$. 
For arbitrary $(\bel, \act) \in B \times \Actions$, let
\begin{equation}
    \error_k(\bel, \act) := 
    \begin{cases}
        \approxAP_{k}(\bel, \act) - [\DPP_\temp \approxAP_{k-1}] (\bel, \act) & \text{if } k \ge 1\\
        0 & \text{if } k = 0
    \end{cases}
\end{equation}
and
$
    E_k(\bel, \act) := \sum_{j = 0}^k \error_j(\bel, \act)
$
and define the approximating policy to be
\begin{equation} \label{eq.approx.pol}
    \approxPol_{k}(\act \, | \, \bel) :=  \frac{\exp[\temp \approxAP_{k}(\bel, \act) ]}{\sum_{\act'} \exp[\temp \approxAP_{k}(\bel, \act') ]}.
\end{equation}
We have the following general error bound which says that the total error is bounded by the \emph{average} of approximation errors at each iteration.
Since the exact scheme has $E_k = 0$ for all $k$, the result also validates the asymptotic convergence of the exact scheme.

\begin{theorem} \label{thm.approx.bound}
Suppose $\| \approxAP_0 \|_\infty \le \Vmax$.
Then
\begin{multline}
        \| \optQVal - \QVal^{\approxPol_k} \|_{\infty} \\
        \le \frac{2}{(1 - \discount)(k + 1)}\Big[  \frac{\discount (4 \Vmax + \frac{\log(|\Actions|)}{\temp})}{(1 - \discount)} \\
        + \sum_{j=0}^k \discount^{k-j} \| E_j \|_\infty  \Big].
\end{multline}
\end{theorem}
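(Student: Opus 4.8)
The plan is to follow the Dynamic Policy Programming template of \cite{azar12}, adapted to the belief-space recursion, in three movements: unroll the preference recursion \eqref{eq.scheme} into a closed form, convert it into a statement about an \emph{averaged} soft-value function, and then bound the policy loss by a contraction argument. First I would telescope \eqref{eq.scheme}. Writing $\approxAP_{j+1} = \DPP_\temp\approxAP_j + \error_{j+1}$ and summing over $j = 0,\dots,k$, the $\approxAP_j(\bel,\act)$ terms on the two sides collapse, leaving a closed form in which the immediate reward $\Reward(\bel,\act)$ is accumulated $(k+1)$ times, the soft-values $[\logOp_\temp\approxAP_j]$ enter through the discounted observation sum, the purely $\bel$-dependent terms $[\logOp_\temp\approxAP_j](\bel)$ appear additively, and the total perturbation is exactly $E_{k+1}(\bel,\act) = \sum_{j=0}^{k}\error_{j+1}(\bel,\act)$. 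The crucial structural observation is that every $\bel$-only term is independent of $\act$ and therefore cancels in the softmax \eqref{eq.approx.pol}; what survives is a preference that grows linearly in $k$ on the reward, plus the accumulated discounted soft-values, plus the cumulative error. Dividing by $(k+1)$ shows that $\approxPol_k$ is, up to a $\bigO(1/(k+1))$ perturbation from $\approxAP_0$ and from the cumulative error, the greedy-like softmax of an averaged value function.

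Next I would set up the policy-loss machinery. Using the soft-max/max gap $0 \le [\logOp_\temp\Psi](\bel) - \max_\act\Psi(\bel,\act) \le \log(|\Actions|)/\temp$, I can replace the entropy-regularised backups by the true optimal Bellman operator at the cost of the additive term $\log(|\Actions|)/\temp$; this is the source of that constant. A performance-difference lemma then controls $\| \optQVal - \QVal^{\approxPol_k} \|_\infty$ by the soft Bellman residual of the averaged preference, contributing the outer factor $1/(1-\discount)$. The initial-condition contribution is bounded using the hypothesis $\|\approxAP_0\|_\infty \le \Vmax$ together with $\|\optVal\|_\infty \le \Vmax$ and the soft-max gap, which produces the $4\Vmax + \log(|\Actions|)/\temp$ term after accounting for the two-sided estimates.

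The heart of the argument is a contraction run over the iteration index. Because each iterate $\approxAP_j$ carries its full cumulative error $E_j$ (as exhibited by the unrolled form), I would establish an inequality of the schematic shape $\epsilon_k \le \discount\,\epsilon_{k-1} + (\text{iterate-}k\text{ term})$ for a suitable measure $\epsilon_k$ of the suboptimality of $\approxPol_k$, where each iterate-$j$ term is governed by $\|E_j\|_\infty$ and the soft-max gap. Unrolling this over $j = 0,\dots,k$ produces exactly the geometrically weighted cumulative-error sum $\sum_{j=0}^{k}\discount^{k-j}\|E_j\|_\infty$, while the leftover initial term carries the $\discount/(1-\discount)$ factor multiplying the constant $4\Vmax + \log(|\Actions|)/\temp$; factoring out the common $1/(k+1)$ from the Ces\`aro average and combining with the $1/(1-\discount)$ from the policy-loss lemma yields the stated prefactor $2/[(1-\discount)(k+1)]$.

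I expect the main obstacle to be the \pomdp-specific bookkeeping: the ``state'' is a belief ranging over the uncountable set $\reachable_\initBel$, so every sup-norm estimate and every application of the Bellman contraction must be justified uniformly over $\reachable_\initBel$ using Assumption \ref{ass.standing} and the uniform bound $|\QVal^\pol| \le \Vmax$. In particular, one must show a priori that the inter-action differences $\approxAP_k(\bel,\act) - \approxAP_k(\bel,\act')$ remain uniformly bounded despite the linear-in-$k$ growth of the preferences themselves---only the $\act$-independent part grows---since otherwise the softmax and soft-max-gap estimates would not be controllable. Establishing this uniform boundedness, and threading the cumulative-error weighting cleanly through the discounted belief recursion, is the crux of the proof.
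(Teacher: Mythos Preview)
Your proposal is correct and follows essentially the same route as the paper: the proof in the Supplementary Material is the belief-space adaptation of the Dynamic Policy Programming analysis of \cite{azar12}, proceeding via the telescoped preference recursion, the log-sum-exp/max gap $0 \le [\logOp_\temp\Psi](\bel) - \max_\act\Psi(\bel,\act) \le \log(|\Actions|)/\temp$, and the contraction unrolling that produces the geometrically weighted sum $\sum_{j=0}^k \discount^{k-j}\|E_j\|_\infty$. One small correction: Assumption~\ref{ass.standing} (total boundedness of $\reachable_\initBel$) is not actually needed for this theorem---the sup-norm arguments go through over any index set once the functions are uniformly bounded by $\Vmax$, and total boundedness only enters later for the covering-number bounds of Theorem~\ref{thm.exp.bd}.
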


\begin{proof}
    See Supplementary Material.
\end{proof}

\subsection{Explicit Sampling-Based Approximate Scheme}
\label{subsec.approx.scheme}

We will now introduce explicit synchronous and asynchronous sampling-based approximate schemes and prove their asymptotic optimality.
In both cases, we prove specialised bounds with respect to the \pomdp's $\delta$-covering numbers.
The asynchronous scheme is especially important, as it forms the basis for the design of our online planning algorithm.

We will need some setting up to introduce the sampling-based scheme that approximates \eqref{eq.scheme}.
Let $\dist$ be a metric on $\belSpace$ and $B$ be some well-ordered\footnote{It suffices for $B$ to be finite.} subset of $\belSpace$.
Let $\nearestBel_{B, \dist}: \belSpace \times \Actions \times \Observations \rightarrow B$ be the mapping which takes an arbitrary belief $\bel \in \belSpace$ to the least element of
\begin{equation} \label{eq.proj}
    \argmin_{\bel' \in B} \dist\big(\bel', \tau(\bel, \act, \obs)\big).
\end{equation}
Intuitively, $\nearestBel_{B, \dist}$ finds the set of points in $B$ nearest to $\tau(\bel, \act, \obs)$ (it is not necessarily a singleton set) and has a rule to break ties so that the mapping is well-defined.

Let $\QVal^\pol_{B, \dist}: \reachable_\initBel \times \Actions \rightarrow \Reals$ be the \emph{action-value approximation} on any subset $B \subset \reachable_\initBel$ which is the unique solution to the recursion
\begin{multline} \label{eq.qval.snap}
        \QVal^\pol(\bel, \act) = \Reward(\bel, \act) \\
        + \discount \sum_{\act', \obs} \QVal^\pol\big(\nearestBel_{B, \dist}(\bel, \act, \obs), \act'\big) \Prob(\obs \, | \, \act', \bel) \, \pol(\act' \, | \, \bel).
\end{multline}
The difference between \eqref{eq.qval} and \eqref{eq.qval.snap} is that the next belief is forced to a nearest belief in $B \subset \reachable_\initBel$ in the latter, whereas the belief update for the former is the natural one. As such, we expect the two quantities to differ according to the precision of $B$ in approximating $\reachable_\initBel$. 
In fact, it can be shown that if $B$ is a $\delta$-covering of $\reachable_\initBel$ the approximation becomes negligible for the optimal policy $\optPol$ as $\delta \downarrow 0$ (see Proposition 3 in the Supplementary Material).

It is clear from \eqref{eq.qval.snap} that it suffices to evaluate $\QVal^\pol_{B, \dist}$ on the subset $B \times \Actions$.
The \emph{synchronous} scheme therefore updates action preference approximations according to the rule
\begin{multline} \label{eq.sbas}
    \approxAP_{k+1}(\bel, \act) := \approxAP_k(\bel, \act) - [\logOp_\temp \approxAP_k] (\bel) 
     + \sum_{i=1}^{N_k(\bel, \act)} \frac{\Reward(\sta_i, \act)}{N_k(\bel, \act)} \\
     + \discount \sum_{j=1}^{M_k(\bel, \act)} \frac{[\logOp_\temp \approxAP_k]\big(\nearestBel_{B, \dist} (\bel, \act, \obs_j) \big)}{M_k(\bel, \act)}
\end{multline}
for all $(\bel, \act) \in B \times \Actions$ where $\sta_i \sim \bel$ and $\obs_j \sim \Prob(\cdot \, | \, \act, \bel)$ and generic increasing sequences $N_k$ and $M_k$ having the property that $N_k(\bel, \act) \uparrow \infty$, $M_k(\bel, \act) \uparrow \infty$ as $k \uparrow \infty$.
The scheme is \emph{synchronous} in the sense that, at each step $k$, it samples $\{\sta_{N_{k-1}(\bel, \act) + 1}, \ldots, \sta_{N_k(\bel, \act)}, \obs_{M_{k-1}(\bel, \act) + 1}, \ldots, \obs_{M_k(\bel, \act)}\}$ \emph{for each} $(\bel, \act)$ and updates the action preferences according to \eqref{eq.sbas}.
The approximate stochastic policy $\approxPol_k$ is then fully specified by the approximate preferences according to \eqref{eq.approx.pol}.

The synchronous scheme yields the following high-probability bound when $B$ is an internal $\delta$-covering $\covSet_\delta$ of $\reachable_\initBel$ for the metric $\dist_1$ induced by the 1-norm---i.e. $\dist_1(x,y) := \| x - y \|_1$ for $x, y \in \Reals^{|\States| - 1}$.\footnote{Note that the Euclidean space under consideration can, in theory, be infinite-dimensional under Assumption \ref{ass.standing}.}

\begin{theorem} \label{thm.exp.bd}
Let $\covNumInt_\delta = |\covSet_\delta|$ be the internal $\delta$-covering number of $\reachable_\initBel$ for a given $\delta > 0$.
If $\| \approxAP_0 \| \le \Vmax$ then, for any $\alpha \in (0,1)$, we have with probability at least $1 - \alpha$
\begin{equation} \label{eq.exp.bd}
    \| \optQVal - \QVal^{\approxPol_k}_{\covSet_\delta, \dist_1} \|_\infty \le \frac{K_1}{k+1} + \frac{K_2}{\sqrt{k+1}} + \frac{\discount \delta \Vmax}{1 - \discount}
\end{equation}
where
\begin{equation}
    K_1 := \frac{2\discount}{(1- \discount)^2} \big[\log(|\Actions|)/\temp + 4 \Vmax \big]
\end{equation}
and
\begin{equation}
    K_2 := \Big[ \frac{4 \discount \log(|\Actions|)}{\temp (1-\discount)^3} + \frac{2 \Vmax}{1 - \discount} \Big] \sqrt{2 \log \Big\{ \frac{2 |\Actions| \covNumInt_\delta}{\alpha}\Big\}}.
\end{equation}
\end{theorem}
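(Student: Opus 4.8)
The plan is to reduce the statement to Theorem~\ref{thm.approx.bound} applied to the \emph{projected} problem induced by $\nearestBel_{\covSet_\delta,\dist_1}$, to dispatch the discretisation error with Proposition~3, and to control the remaining stochastic error by concentration. The starting observation is that the synchronous update \eqref{eq.sbas} is exactly a noisy iteration $\approxAP_{j}\approx\DPP_\temp\approxAP_{j-1}$ for the variant of $\DPP_\temp$ in which the true belief transition $\tau$ is replaced by its projection $\nearestBel_{\covSet_\delta,\dist_1}$, and the reward $\Reward(\bel,\act)$ and observation expectation are replaced by Monte Carlo averages over $N_{j-1}$ and $M_{j-1}$ samples. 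First I would split, by the triangle inequality,
\[
    \| \optQVal - \QVal^{\approxPol_k}_{\covSet_\delta,\dist_1} \|_\infty \le \| \optQVal - \optQVal_{\covSet_\delta,\dist_1} \|_\infty + \| \optQVal_{\covSet_\delta,\dist_1} - \QVal^{\approxPol_k}_{\covSet_\delta,\dist_1} \|_\infty ,
\]
where $\optQVal_{\covSet_\delta,\dist_1}$ is the optimal action-value of the projected POMDP. The first term is a pure covering error and is bounded by $\discount\delta\Vmax/(1-\discount)$ via Proposition~3, giving the last summand of \eqref{eq.exp.bd}.

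For the second term I would invoke Theorem~\ref{thm.approx.bound} with the projected operator playing the role of the exact $\DPP_\temp$. Its deterministic bracket reproduces $K_1/(k+1)$ exactly (compare the definition of $K_1$), so the task reduces to showing that the averaged error $\frac{2}{(1-\discount)(k+1)}\sum_{j=0}^k \discount^{k-j}\|E_j\|_\infty$ is at most $K_2/\sqrt{k+1}$ with probability $1-\alpha$. Here each increment $\error_j(\bel,\act)$ is the mean-zero fluctuation of the reward estimate plus $\discount$ times the fluctuation of the projected observation expectation of the soft-value $\logOp_\temp\approxAP_{j-1}$.

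To bound the partial sums $E_j=\sum_{i\le j}\error_i$, I would first record that the iterates remain bounded---specifically that the soft-values $\logOp_\temp\approxAP_j$ stay in a range of order $\Vmax + \log(|\Actions|)/\temp$, which follows from the boundedness argument already used for Theorem~\ref{thm.approx.bound}. Consequently both Monte Carlo terms are averages of i.i.d.\ bounded variables: the reward term of range $\bigO(\Rmax)$ and the observation term of range $\bigO(\Vmax + \log(|\Actions|)/\temp)$. Applying Hoeffding's inequality to each pair $(\bel,\act)\in\covSet_\delta\times\Actions$ and to each of the two terms, and taking a union bound over the $\covNumInt_\delta$ beliefs, the $|\Actions|$ actions, and the two terms---this is the origin of the $\sqrt{2\log(2|\Actions|\covNumInt_\delta/\alpha)}$ factor---yields $\|E_j\|_\infty = \bigO(\sqrt{j})$ with probability $1-\alpha$, where the constant carries the two ranges weighted by the appropriate powers of $(1-\discount)$. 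Substituting this into the averaged sum and using $\sum_{j=0}^k \discount^{k-j}\sqrt{j}\le \sqrt{k}/(1-\discount)$ collapses the $(k+1)^{-1}\sqrt{k}$ factor to the advertised $(k+1)^{-1/2}$ rate and reproduces the two summands of $K_2$.

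The hard part will be this concentration step. The increments $\error_j$ are not independent across $j$: the observation fluctuation depends on the current preferences $\approxAP_{j-1}$, and because the synchronous scheme reuses an accumulating pool of samples, the successive running averages are strongly correlated. Obtaining a bound on $\max_{j\le k}\|E_j\|_\infty$ that holds uniformly over iterations while keeping only $\covNumInt_\delta$ and $\alpha$---and not $k$---inside the logarithm requires treating the observation terms as a martingale conditioned on the filtration generated by the iterates, and the fixed reward stream via a weighted-sum Hoeffding bound. This is also where the precise $(1-\discount)$ bookkeeping of $K_2$ resides. By comparison, the covering bound and the reduction to Theorem~\ref{thm.approx.bound} are routine.
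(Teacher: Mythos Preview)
Your architecture matches what the paper's setup dictates: Theorem~\ref{thm.approx.bound} is deliberately stated with generic cumulative errors $E_j$ precisely so that Theorem~\ref{thm.exp.bd} follows by instantiating it on the projected problem, and the paper explicitly flags Proposition~3 of the Supplementary Material as supplying the $\discount\delta\Vmax/(1-\discount)$ covering term. Your identification of the concentration of $\max_{j\le k}\|E_j\|_\infty$---with no $k$-dependence inside the logarithm---as the crux, and the martingale/weighted-Hoeffding route to it, is the right diagnosis; since the paper defers the actual argument to the Supplementary Material a line-by-line comparison is not possible here, but nothing in your plan departs from the route the paper's organisation signals.
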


\begin{proof}
See Supplementary Material.
\end{proof}

Although the precision of the bound gets more precise after every synchronous update, the error can still be large if the covering $\covSet_\delta$ is not a good representation of $\reachable_\initBel$---i.e. $\delta$ is large. In general, $\covSet_\delta$ may be required to be extremely large and performing even one synchronous update can be an exorbitantly expensive task. 

To mitigate this fundamental problem, \nos~employs a heuristic action sampler $\sampler$ to bias towards a selection of promising beliefs and asynchronously updates preference approximations on the selection. The underpinning assumption for optimality of this procedure is that the selection grows to include the set of beliefs reachable under the optimal policy $\optPol$---which is not known a priori---while simultaneously being small enough to be tractable for online planning.

More precisely, let $\covSet_\delta$ be an internal $\delta$-covering of $\reachable_\initBel$ and let $\subsetBA_k :=   \big((\bel_1, \act_1), (\bel_2, \act_2), \ldots, (\bel_k, \act_k)\big)$ be the sequence of pairs in $\covSet_\delta \times \Actions$ traversed by $\sampler$ after $k$ steps.
Then, by definition, our \emph{asynchronous} scheme updates action preference approximations according to
\begin{multline} \label{eq.async}
    \approxAP_{k+1}(\bel_k, \act_k) := \approxAP_k(\bel_k, \act_k) \\
    - [\logOp_\temp \approxAP_k] (\bel_k) 
     + \sum_{i=1}^{N(\bel_k, \act_k)} \frac{\Reward(\sta_i, \act_k)}{N(\bel_k, \act_k)} \\
     + \discount \sum_{j=1}^{N(\bel_k, \act_k)} \frac{[\logOp_\temp \approxAP_k]\big(\nearestBel_{\covSet_\delta, \dist_1} (\bel_k, \act_k, \obs_j) \big)}{N(\bel_k, \act_k)}
\end{multline}
where $\sta_i \sim \bel_k$ and $\obs_j \sim \Prob(\cdot \, | \, \act_k, \bel_k)$ and $N(\bel_k, \act_k)$ is the number of times $\sampler$ has visited $(\bel_k, \act_k)$.
Let $\optReachable_\initBel$ be the set of beliefs reachable under the optimal policy $\optPol$ of the \pomdp~and denote by $\kappa_k$ the number of times that $\sampler$ has visited $\subsetBA_\infty$ after $k$ steps.
Then, provided $\sampler$ traverses $\subsetBA_\infty$ infinitely often and $\{ \bel : (\bel, \act) \in \subsetBA_\infty \} \supset \optReachable_\initBel \cap \covSet_\delta$, the bounds of Theorem \ref{thm.exp.bd} hold
with \eqref{eq.exp.bd} replaced by
\begin{equation}
    \| \optQVal - \QVal^{\approxPol_{k}}_{\covSet_\delta, \dist_1} \|_\infty \le \frac{K_1}{\kappa_k+1} + \frac{K_2}{\sqrt{\kappa_k+1}} + \frac{\discount \delta \Vmax}{1 - \discount}
\end{equation}
for $\optQVal$ and $\QVal^{\approxPol_k}_{\covSet_\delta, \dist_1}$ being functions defined on $\subsetBA_\infty$ and $\covNumInt_\delta$ now being the $\delta$-covering number of $\subsetBA_\infty$.
As such, we would like to ensure that $\subsetBA_\infty$ is as small as possible without compromising optimality.

\subsection{Algorithm: \nos}

We propose \nosLong~(\nos), a specific online implementation of the asynchronous scheme discussed above.
\nos~represents beliefs as nodes $\hist$ in a tree where each node is associated with a history of action-observation pairs and maintains a belief estimate by progressively sampling a richer set of state particles at each node.
With enough time, the planner grows a rich tree (i.e. $\delta$ small) and improves preference estimates by sampling sequences of action-observation histories up to a required depth $\maxDepth$ and backing up estimates according to the sampling-based scheme.

\begin{algorithm}[tbp]
    \caption{\nos}
    \label{alg.porpi}
    \textbf{Input}: Root node $\hist_0$ of $\Tree$ equipped with belief particles $\bel$\\
    \textbf{Output}: $\Tree$
    \begin{algorithmic}[1] 
    \WHILE{steps remaining}
        \WHILE{time permitting}
            \STATE Sample belief particle $\sta$ from $\hist_0$
            \STATE $\textsc{Simulate}(\hist_0, \sta, 0)$
        \ENDWHILE
        \STATE $\mact \gets \argmax_{\mact \in \text{children}(\hist_0)} \approxAP(\hist_0 \mact)$
        \STATE Execute macro action $\mact$ in environment
        \STATE Receive macro observation $\mobs$ from environment
        \STATE Update history $\hist_0 \gets \hist_0 \mact \mobs$
        \STATE Resample new state particles and add to $\hist_0$
    \ENDWHILE
    \end{algorithmic}
\end{algorithm}

Specifically, at each history, \nos's heuristic action sampler \textsc{SampleCandidateAction}$(\hist, \sta)$ uses domain specific knowledge about the problem to propose a (macro) action $\mact$---i.e. a sequence of primitive actions---to add to the tree.
The aim of the sampler is to sample actions that cover the optimal policy while avoiding counterproductive ones---see Sec. \ref{sec.heur} for examples.
The action is added to the tree if it has not been already and the progressive widening threshold $\AWF N(\hist)^\AWA$ (e.g. \cite{sunberg}) has not been exceeded.
\nos~then selects an action 
by sampling the softmax distribution given by the current preferences---cf. \eqref{eq.approx.pol}---before sampling a new state $\nst$, (macro) observation $\mobs$ and (macro) reward $r(\sta, \mact; \discount)$ using a generative model.
The observation is then added to the tree, and the procedure continues recursively until the depth exceeds $\maxDepth$. At this point the value is estimated from the sampled state using a value heuristic and the information is propagated back up to the root node via \eqref{eq.async} (lines 18 to 23 in Algorithm \ref{alg.simulate}).

\begin{algorithm}[tbp] 
    \caption{$\textsc{Simulate}(\hist, \sta, \depth)$} 
    \label{alg.simulate}
    \textbf{Parameters}: {$\AWF\ge0, \AWA\in(0,1), \maxDepth\ge1, \temp > 0$.}\par\vspace{-0.em}
    \begin{algorithmic}[1]
    \IF {$\depth > \maxDepth$}
        \RETURN $\textsc{ValueHeuristic}(\hist, \sta)$
    \ENDIF
    \IF {$\depth > 0$}
        \STATE $\bel(\hist) \gets \bel(\hist) \cup \{\sta\}$
    \ENDIF
    \STATE $N(h) \gets N(h) + 1$

    \IF {$|\text{children}(\hist)| < \AWF N(h)^\AWA$}
        \STATE $\mact \gets \textsc{SampleCandidateAction}(\hist, \sta)$
        \IF {$\hist\mact \notin \Tree$}
            \STATE Add $\hist\mact$ to $\Tree$
        \ENDIF
    \ENDIF
    \STATE $\mact \gets \textsc{SamplePrefSoftMax}(\hist; \temp)$
    \STATE Resample $\sta$ from $\bel(\hist)$
    \STATE Sample $(\nst, \mobs, r(\sta, \mact; \discount))$ from gen. model $\GenModel(\sta, \mact)$
    \STATE Create nodes for $\hist \mact \mobs$ if not created already
    \STATE $N(\hist \mact) \gets N(\hist \mact) + 1$
    \STATE $R(\hist \mact) \gets R(\hist \mact) + \frac{r(s, \mact; \discount) - R(\hist \mact)}{N(\hist \mact)}$
    \STATE $D(\hist \mact) \gets D(\hist \mact) + \frac{\textsc{Simulate}(\hist \mact \mobs, \nst, \depth + |\mact|) - D(\hist\mact)}{N(\hist \mact)}$
    \STATE $\approxAP(\hist \act) \gets \approxAP(\hist \act) - \refVal(\hist) + R(\hist \mact) + \discount^{|\mact|} D(\hist \mact)$
    \STATE $\refVal(\hist) \gets \log \big\{ \sum_{\act \in \text{children}(\hist)} \exp[\temp \approxAP (\hist \act)] \big\} / \temp$
    \RETURN $\refVal(\hist)$
    \end{algorithmic}
\end{algorithm}

This planning procedure continues until timeout (lines 2 to 5 in Algorithm \ref{alg.porpi}) after which the algorithm executes the action with the best sampled preference in the environment.
Upon receiving an observation, particles that are consistent with the realised action-observation pair are resampled and added to the associated node (line 10 in Algorithm \ref{alg.porpi}).
This planning-execution loop continues until a step budget is reached, at which point the algorithm terminates.

\section{Experiments}

\subsection{Problem Scenarios}

We evaluated the performance of \nos~on two challenging long-horizon \pomdp s.

\paragraph{3D Maze with Poor Localisation.}
A 3-dimensional holonomic cuboid drone needs to navigate to one of two goal regions in a closed maze with very poor localisation (Figure \ref{fig.urban}).
The state of the robot is represented by a continuous 3-dimensional co-ordinate for its centre of mass, and the robot can move continuously in any direction of fixed magnitude (i.e. $v = 1$) plus some mean zero (Gaussian) noise with covariance matrix $\mathbf{I} \times 0.02 \times v$ and any movement conforms to the ``walls'' of the environment.
However, the robot does not know its true state and only knows that it can spawn at two starting positions with equal probability (Figure \ref{fig.urban}).
The robot can only localise its co-ordinate if it comes in contact with a landmark where it receives an observation of its true position; otherwise, it receives no feedback about its position.
The scenario terminates if the robot comes in contact with a danger zone---which incurs a penalty of -500---or reaches the goal---which yields a reward of 2000.
A step penalty of -5 is incurred in all other cases.
This is a long-horizon problem requiring 100 steps to reach the goal while simultaneously navigating around danger zones.

\begin{figure}[tb]
    \centering
    \includegraphics[width=0.95\linewidth]{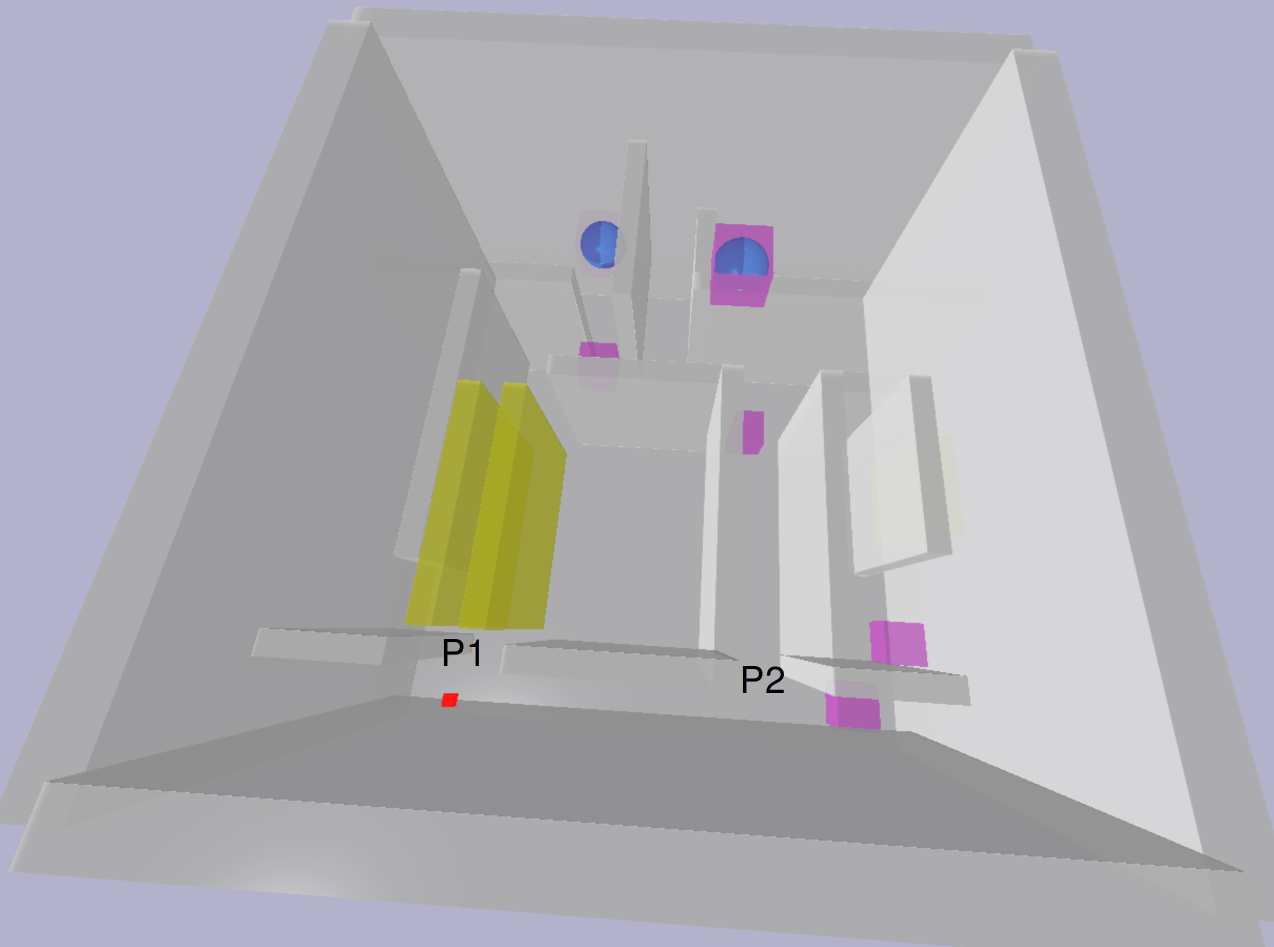}
    \caption{\emph{3D Maze with Poor Localisation}. The environment is a closed box and walls are \textbf{\textcolor{Gray}{grey}}; danger zones are \textbf{\textcolor{Dandelion}{yellow}}; landmarks are \textbf{\textcolor{Mulberry}{purple}}; goal region is labelled \textbf{\textcolor{Blue}{blue}}.
    The robot spawns in two positions P1 and P2 with equal likelihood and the robot does not receive any initial feedback about its position.
    If the robot spawns at P1, the direct route to the goal has a high likelihood of collision with a danger zone so the robot must localise first and take a safer route.} 
    \label{fig.urban}
\end{figure}

\paragraph{HEMS Mission with Evolving No-Fly-Zones.} 
We considered a Helicopter Emergency Medical Service (HEMS) mission set on the Cap Corse peninsula in Corsica (Figure \ref{fig.corsica}). The mesh used to generate the terrain was extracted from X-Plane 12.
The mission objective is to navigate a holonomic helicopter starting from the west end of the island (arrow in Figure \ref{fig.corsica} (a)) to two unordered objectives---i.e. the victim's locations (green balls in Figure \ref{fig.corsica}) where the agent receives a reward of 2000 for each new objective achieved.
The mission ends if there is a collision (which incurs a reward penalty of -2000) or both objectives are achieved---i.e. the mission is accomplished---which yields an additional reward of 20000.
The scenario is complicated by the fact that no-fly-zones (NFZs) evolve at fixed time steps that are unknown to the agent (see Figure \ref{fig.corsica}).
The agent need not avoid NFZs entirely, but incurs an additional penalty of -20 for each step inside a NFZ.
We assume that the agent has no predictive model of when NFZs will appear; hence, the agent only re-plans with respect to reward changes due to NFZ evolutions.
To encourage the agent to achieve the objective, a step penalty of -5 is incurred at each time step.
The state of the helicopter is fully specified by a continuous 3-dimensional co-ordinate representing the helicopter's centre of mass (its orientation is always fixed)---notice that fuel and weight of the craft are not considerations---and actions are the continuous directional vectors of a fixed magnitude $v = 2$ (i.e. the helicopter's speed) representing the agent's intended direction of movement.
Transitions in the intended direction and readings of the true state of the helicopter are subject to Gaussian noise with covariance matrices $\mathbf{I} \times 0.25 \times v$ and $\mathbf{I} \times 0.2$ respectively.
This problem is a long-horizon problem often requiring a minimum of 150 steps to accomplish the mission without consideration of NFZs.

\begin{figure}[tbp]
    \centering
    \begin{subfigure}[b]{0.23\textwidth}
        \centering
        \includegraphics[width=\textwidth]{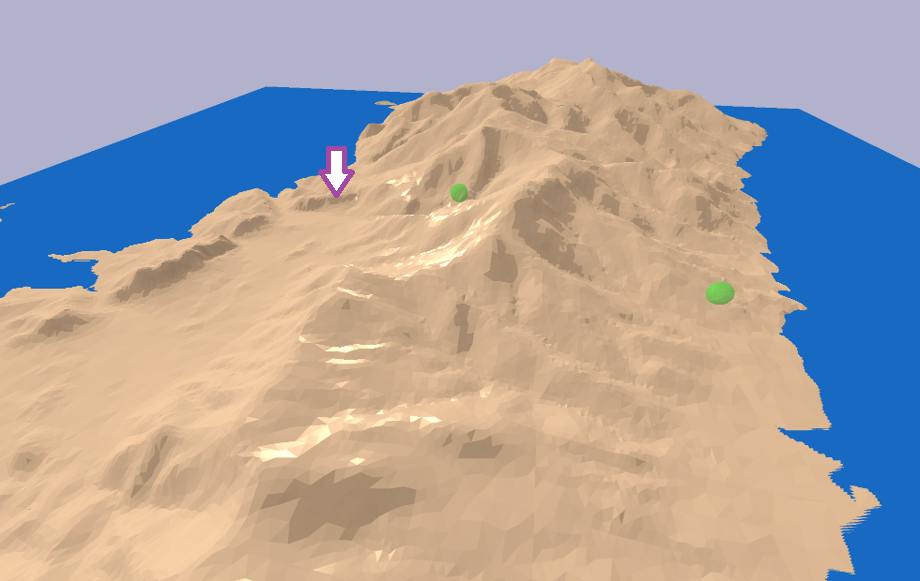}
        \caption
        {{\small Steps 1--14}}    
        \label{fig.corsica1}
    \end{subfigure}
    \begin{subfigure}[b]{0.23\textwidth}  
        \centering 
        \includegraphics[width=\textwidth]{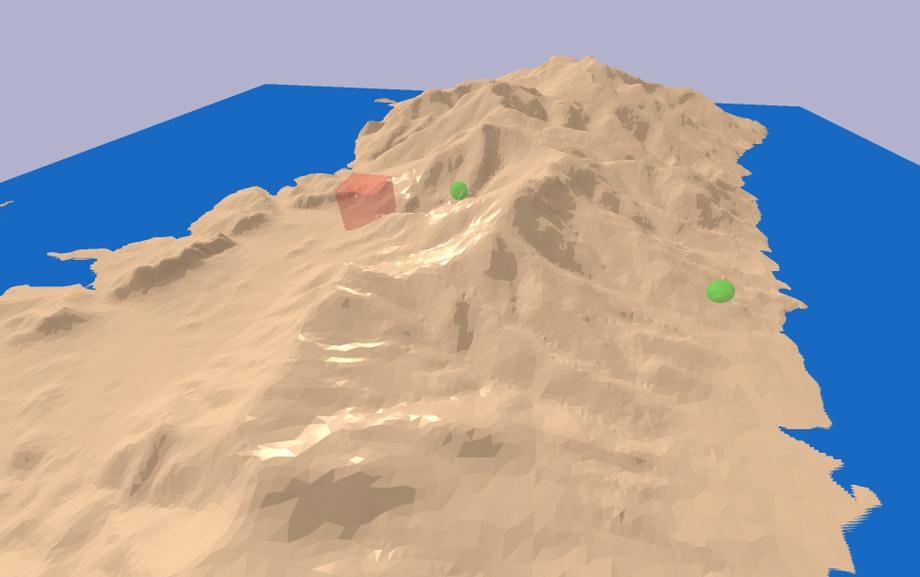}
        \caption
        {{\small Steps 15--49}}    
        \label{fig.corsica2}
    \end{subfigure}
    \begin{subfigure}[b]{0.23\textwidth}   
       \centering 
        \includegraphics[width=\textwidth]{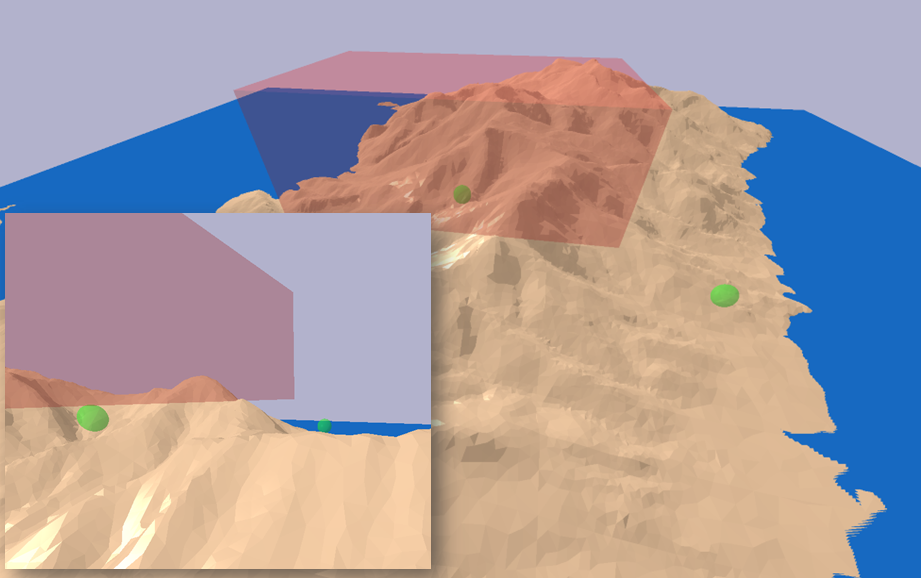}
        \caption
        {{\small Steps 50--99}}   
        \label{fig.corsica3}
    \end{subfigure}
    \begin{subfigure}[b]{0.23\textwidth}   
        \centering 
        \includegraphics[width=\textwidth]{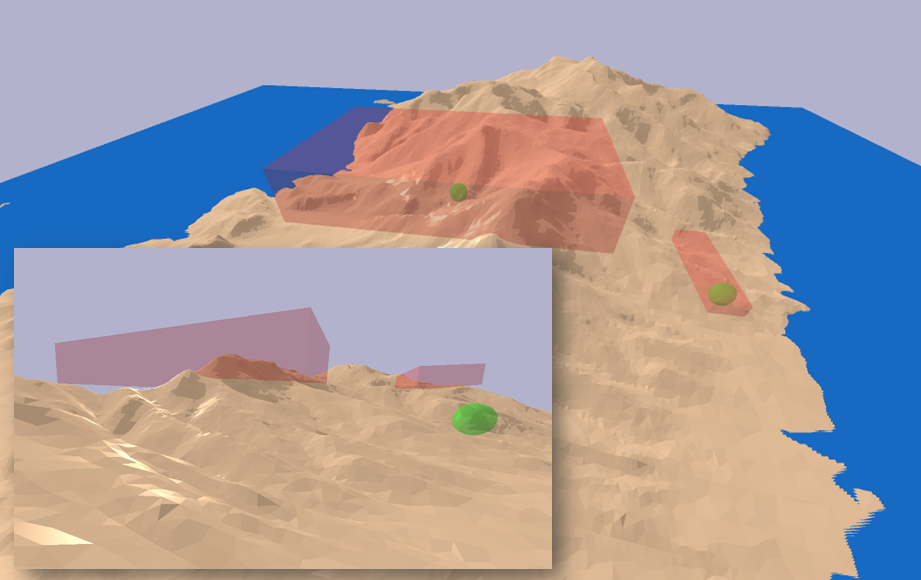}
        \caption
        {{\small Steps 99+}}   
        \label{fig.corsica4}
    \end{subfigure}
    \caption[]
    {Corsica Rescue Mission with Evolving NFZs. The starting position is indicated by the arrow in (a); objectives are \textbf{\textcolor{Green}{green}}; NFZs are \textbf{\textcolor{red}{red}}. The environment evolves at preset time-steps that are unknown to the agent. The agent should react to avoid NFZs but may elect not to do so in order to evade a greater catastrophe.}
    \label{fig.corsica}
\end{figure}

\subsection{Heuristic Action Sampler}
\label{sec.heur}

One crucial factor in the overall performance of \nos~is the heuristic action sampler \textsc{SampleCandidateAction}$(\hist, \sta)$.
We stress that the heuristic action sampler is \emph{not} a solution to the \pomdp; indeed, the heuristic sampler need not account for uncertainty being a function of a determined state.
Rather, its fundamental purpose is to exploit domain-specific knowledge to propose promising actions to explore given a belief.

In both environments, our specific implementation of this subroutine relies on an offline-generated Probabilistic Roadmap (PRM) \cite{choset} to represent the environment's collision-free configuration space.
Based on the input particle an \emph{objective} in the environment's configuration space is sampled and collision-free paths to the sampled objective are queried from the PRM. That is,
\begin{itemize}
    \item For the 3D maze, a random landmark or goal region is sampled and targeted and the shortest path on the PRM starting from the position given by the state particle to the target is returned.
    \item For the Corsica map, the state $\sta$ records which victims have been visited.
    Accordingly, a simple homotopic collision-free path starting from the helicopter's position (as recorded in $\sta$) and ending at a random unvisited victim location is sampled. 
\end{itemize}
The returned paths are then truncated at a fixed length, and a macro action which traces the path is returned.

\subsection{Benchmark Methods}

The benchmarks used for comparison are:
\begin{itemize}
    \item \textit{RefPol}. This simply samples a state particle and executes the action returned by the heuristic action sampler without further \pomdp~planning.
    \item \textit{RefSolver}. The solver from \cite{kkk23} a \rbpomdp~which uses the heuristic action sampler as its \emph{reference policy}.
    \item \textit{POMCP}---\cite{sv10}. The canonical benchmark to beat for online POMDP planning. For a fair comparison, it expands 16 macro actions composed of equally spaced directional vectors.
\end{itemize} 

\subsection{Experimental Setup}

All experiments were performed on a desktop computer with 128GB DDR4 RAM and an 8 Core Intel Xeon Silver 4110 Processor.
All solvers were implemented in the pomdp\_py library \cite{h2r} and Cythonised for a fair comparison.
The discount factor for all environments was $\discount = 0.99$.\footnote{See \url{https://github.com/RDLLab/pomdp-py-porpp} for the code and parameters used to run the experiments.}

\subsection{Results and Discussion}

Results are summarised in Table \ref{tab:3dmaze} and Table \ref{tab:corsica}.
In both scenarios we ran RefPol to corroborate our claim that the heuristic action sampler is significantly sub-optimal.
Still, \nos~was able leverage the heuristic action sampler to significantly outperform both benchmarks yielding very high success rates with $>$10 seconds of planning time. As expected from our theoretical analysis, the results improve in trend with the planning time.
Notably, RefSolver does not improve quite as much \nos which seems consistent with the idea that RefSolver is converging to a policy which is somewhere in between the reference policy and the optimal policy of the \pomdp.
POMCP, meanwhile, was myopic in both scenarios and could not take advantage of deep rewards even when helped by macro actions because of the need to exhaustively enumerate.
Interestingly, in the HEMS mission, we typically observe the \nos~policy trace non-trivially adapting to the environment (Figure \ref{fig.deform}).

\begin{table}[tb]
    \centering
    \begin{tabular}{lrrr}
        \toprule
        Planners & Time (s) & Succ. \% & E[Tot. Reward] \\
        \midrule
        \nos & 1 & 71 & 570.3 $\pm$ 183.5 \\
            & 2 & 75 & 628.9 $\pm$ 191.3 \\
            & 3 & 80 & 625.0 $\pm$ 215.2 \\
            & 5 & 81 & 688.3 $\pm$ 200.1 \\
            & 10 & 88 & 873.4 $\pm$ 172.1 \\
            & 15 & 94 & 983.1 $\pm$ 168.0 \\
    RefSolver & 2 & 39 & -244.6 $\pm$ 224.5\\
              & 3 & 38 & -278.9 $\pm$ 216.5\\
              & 5 & 26 & -544.9 $\pm$ 204.6\\
              & 10 & 30 & -384.0 $\pm$ 213.3 \\
        POMCP & 2 & 10 & -786.3 $\pm$ 378.0 \\
              & 3 & 9 & -1637.3 $\pm$ 256.8 \\
              & 5 & 7 & -2150.8 $\pm$ 161.0 \\
              & 10 & 13 & -1897.5 $\pm$ 240.1 \\
        RefPol & N/A & 29 & -572.2 $\pm$ 231.1 \\
        \bottomrule
        \end{tabular}
    \caption{Results for 3D Maze with Poor Localisation (100 runs; maximum macro action length = 10)}
    \label{tab:3dmaze}
\end{table}

\begin{table}[tb]
    \centering
    \begin{tabular}{lrrr}
        \toprule
        Planners & Time (s) & Succ. \% & E[Tot. Reward] \\
        \midrule
        \nos & 1 & 58 & 11393.5 $\pm$ 1588.4\\
             & 2 & 75 & 15408.8 $\pm$ 1399.3\\
             & 3 & 78 & 16207.7 $\pm$ 1316.7\\
             & 5 & 78 & 16231.6 $\pm$ 1320.2\\
             & 10 & 90 & 19393.5 $\pm$ 967.9\\
    RefSolver & 2 & 2 & -1453.9 $\pm$ 947.8\\
              & 3 & 4 & -860.6 $\pm$ 1297.7\\
              & 5 & 28 & 3514.9 $\pm$ 3043.1\\
              & 10 & 22 & 2258.7 $\pm$ 2809.2\\
        POMCP & 2 & 2 & -410.5 $\pm$ 900.0\\
              & 3 & 0 & -942.5 $\pm$ 181.1\\
              & 5 & 2 & -421.8 $\pm$ 928.1\\
              & 10 & 0 & -839.6 $\pm$ 227.4\\
        RefPol & N/A & 0 & -6584.3 $\pm$ 379.5 \\
    \bottomrule
    \end{tabular}
    \caption{Results for HEMS Mission with Evolving NFZs (100 runs; maximum macro action length = 15)}
    \label{tab:corsica}
\end{table}

\begin{figure}[tb]
    \centering
    \begin{subfigure}[b]{0.225\textwidth}   
       \centering 
        \includegraphics[width=\textwidth]{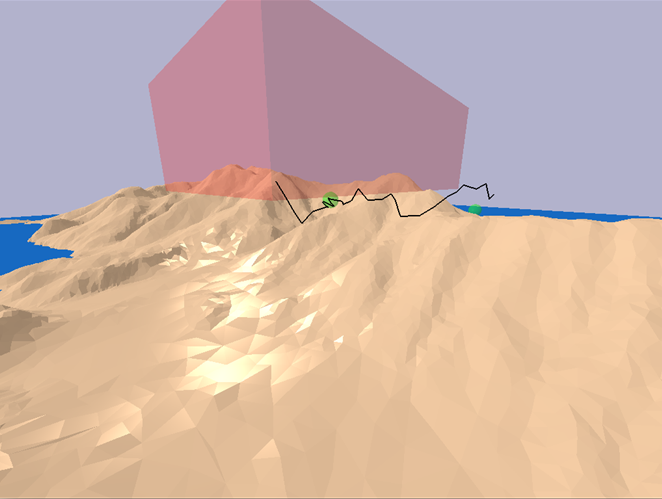}
        \label{fig.deform1}
    \end{subfigure}
    \begin{subfigure}[b]{0.225\textwidth}   
        \centering 
        \includegraphics[width=\textwidth]{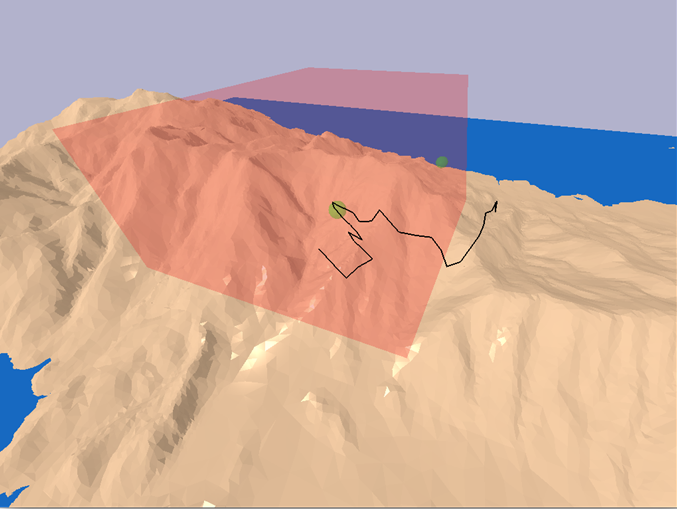}
        \label{fig.deform2}
    \end{subfigure}
    \caption[]
    {Two perspectives of the \nos~trajectory trace of the helicopter in the HEMS mission during steps 50--149. At the beginning of the trace the helicopter initially descends to avoid the new NFZ and targets the nearest objective. Once this objective is achieved, the helicopter successful navigates a path around the NFZ and surrounding terrain rather than taking the shortest path through the NFZ to the next objective.}
    \label{fig.deform}
\end{figure}

\section{Summary}

This paper presents \nos~an online particle-based anytime \pomdp~solver which provably approximates a gradual KL-constrained iterative scheme making it robust to large approximation errors.
Empirical results indicate the feasibility of our planner for large-scale \pomdp s showing that it outperforms existing benchmarks for the long-horizon \pomdp s with evolving environments presented in this paper.

For future work, we would like to examine the solver on non-holonomic problems (realistic ODE approximations of helicopter dynamics, robotic manipulators, etc.) with more complex domains (e.g. HEMS fire and flood rescue scenarios).
We would also like to systematically stress test \nos~with respect to different parameter settings and choices of heuristic samplers.

\section*{Acknowledgments}

This work was supported by Safran Electronics \& Defense Australia Pty Ltd and Safran Group under the ARC Linkage project LP200301612.

\bibliographystyle{named}
\bibliography{references}

\begin{thebibliography}{}

\bibitem[\protect\citeauthoryear{Asadi and Littman}{2017}]{al2017}
Kavosh Asadi and Michael Littman.
\newblock An alternative softmax operator for reinforcement learning.
\newblock In {\em Proceedings of the 34th International Conference on Machine Learning - Volume 70}, ICML'17, page 243–252, 2017.

\bibitem[\protect\citeauthoryear{Azar \bgroup \em et al.\egroup }{2011}]{azar11}
Mohammed~Gheshlaghi Azar, Vincen\c{c} Gómez, and Hilbert Kappen.
\newblock Dynamic policy programming with function approximation.
\newblock In {\em Proceedings of the Fourteenth International Conference on Artificial Intelligence and Statistics}, volume~15 of {\em Proceedings of Machine Learning Research}, pages 119--127, 2011.

\bibitem[\protect\citeauthoryear{Azar \bgroup \em et al.\egroup }{2012}]{azar12}
Mohammed~Gheshlaghi Azar, Vincen\c{c} Gómez, and Hilbert Kappen.
\newblock Dynamic policy programming.
\newblock {\em Journal of Machine Learning Research}, 13:3207--3245, 2012.

\bibitem[\protect\citeauthoryear{Blanchard \bgroup \em et al.\egroup }{2021}]{bhh2021}
Pierre Blanchard, Desmond~J. Higham, and Nicholas~J. Higham.
\newblock Accurately computing the log-sum-exp and softmax functions.
\newblock {\em IMA Journal of Numerical Analysis}, 41:2311--2330, 2021.

\bibitem[\protect\citeauthoryear{Choset \bgroup \em et al.\egroup }{2005}]{choset}
Howie Choset, Kevin Lynch, Seth Hutchinson, George Kantor, Wolfram Burgard, Lydia Kavraki, and Sebastian Thrun.
\newblock {\em Principles of Robot Motion: Theory, Algorithms and Implementation}.
\newblock MIT Press, 2005.

\bibitem[\protect\citeauthoryear{Dvijotham and Todorov}{2012}]{dvijotham2012linearly}
Krishnamurthy Dvijotham and Emmanuel Todorov.
\newblock Linearly solvable optimal control.
\newblock {\em Reinforcement learning and approximate dynamic programming for feedback control}, pages 119--141, 2012.

\bibitem[\protect\citeauthoryear{Flaspohler \bgroup \em et al.\egroup }{2020}]{FRF2020}
Genevieve Flaspohler, Nicholas Roy, and John~W. Fisher~III.
\newblock Belief-dependent macro-action discovery in {POMDP}s using the value of information.
\newblock In H.~Larochelle, M.~Ranzato, R.~Hadsell, M.F. Balcan, and H.~Lin, editors, {\em Advances in Neural Information Processing Systems}, volume~33, pages 11108--11118. Curran Associates, Inc., 2020.

\bibitem[\protect\citeauthoryear{H2RLab}{2024}]{h2r}
H2RLab.
\newblock pomdp\_py.
\newblock \url{https://h2r.github.io/pomdp-py}, 2024.
\newblock Accessed: 2025-06-06.

\bibitem[\protect\citeauthoryear{He \bgroup \em et al.\egroup }{2010}]{he10:puma}
Ruijie He, Emma Brunskill, and Nicholas Roy.
\newblock {PUMA:} planning under uncertainty with macro-actions.
\newblock In Maria Fox and David Poole, editors, {\em Proceedings of the Twenty-Fourth {AAAI} Conference on Artificial Intelligence, {AAAI} 2010, Atlanta, Georgia, USA, July 11-15, 2010}, pages 1089--1095. {AAAI} Press, 2010.

\bibitem[\protect\citeauthoryear{Kaebling \bgroup \em et al.\egroup }{1998}]{kaebling98}
Leslie Kaebling, Michael Littman, and Anthony Cassandra.
\newblock Planning and acting in partially observable stochastic domains.
\newblock {\em Artificial Intelligence}, 101:99--134, 1998.

\bibitem[\protect\citeauthoryear{Kim \bgroup \em et al.\egroup }{2023}]{kkk23}
Edward Kim, Yohan Karunanayake, and Hanna Kurniawati.
\newblock Reference-based {POMDPs}.
\newblock In A.~Oh, T.~Naumann, A.~Globerson, K.~Saenko, M.~Hardt, and S.~Levine, editors, {\em Advances in Neural Information Processing Systems}, volume~36, pages 40659--40675. Curran Associates, Inc., 2023.

\bibitem[\protect\citeauthoryear{Kurniawati \bgroup \em et al.\egroup }{2011}]{KDHL2011}
Hanna Kurniawati, Yanzhu Du, David Hsu, and Wee~Sun Lee.
\newblock Motion planning under uncertainty for robotic tasks with long time horizons.
\newblock {\em International Journal of Robotics Research}, 30(3):308--323, 2011.

\bibitem[\protect\citeauthoryear{Kurniawati}{2022}]{kurniawatiSurvey}
Hanna Kurniawati.
\newblock {Partially observed Markov decision processes and robotics}.
\newblock {\em Annual Review of Control, Robotics, and Autonomous Systems}, 5:254--277, 2022.

\bibitem[\protect\citeauthoryear{Lee \bgroup \em et al.\egroup }{2007}]{easypomdp}
Wee Lee, Nan Rong, and David Hsu.
\newblock What makes some {POMDP} problems easy to approximate?
\newblock In J.~Platt, D.~Koller, Y.~Singer, and S.~Roweis, editors, {\em Advances in Neural Information Processing Systems}, volume~20. Curran Associates, Inc., 2007.

\bibitem[\protect\citeauthoryear{Lee \bgroup \em et al.\egroup }{2021}]{lee.rss}
Yiyuan Lee, Panpan Cai, and David Hsu.
\newblock {MAGIC: Learning Macro-Actions for Online POMDP Planning }.
\newblock In {\em Proceedings of Robotics: Science and Systems}, July 2021.

\bibitem[\protect\citeauthoryear{Liang \bgroup \em et al.\egroup }{2024}]{lktkkk24}
Yuanchu Liang, Edward Kim, Wil Thomason, Zachary Kingston, Hanna Kurniawati, and Lydia Kavraki.
\newblock Scaling long-horizon online {POMDP} planning via rapid state space sampling.
\newblock In {\em Springer Proc. in Adv. Rob. (to appear)}, 2024.
\newblock arXiv:2411.07032.

\bibitem[\protect\citeauthoryear{Madani \bgroup \em et al.\egroup }{2003}]{madani}
Omid Madani, Steve Hanks, and Anne Condon.
\newblock On the undecidability of probabilistic planning and related stochastic optimization problems.
\newblock {\em Artificial Intelligence}, 147:5--34, 2003.

\bibitem[\protect\citeauthoryear{Rawlik \bgroup \em et al.\egroup }{2012}]{rawlik2012stochastic}
Konrad Rawlik, Marc Toussaint, and Sethu Vijayakumar.
\newblock On stochastic optimal control and reinforcement learning by approximate inference.
\newblock {\em Proceedings of Robotics: Science and Systems VIII}, 2012.

\bibitem[\protect\citeauthoryear{Silver and Veness}{2010}]{sv10}
David Silver and Joel Veness.
\newblock {Monte-Carlo planning in large POMDPs}.
\newblock In J.~Lafferty, C.~Williams, J.~Shawe-Taylor, R.~Zemel, and A.~Culotta, editors, {\em Advances in Neural Information Processing Systems}, volume~23. Curran Associates, Inc., 2010.

\bibitem[\protect\citeauthoryear{Sunberg and Kochenderfer}{2018}]{sunberg}
Zachary Sunberg and Mykel Kochenderfer.
\newblock Online algorithms for {POMDP}s with continuous state, action, and observation spaces.
\newblock {\em Proceedings of the International Conference on Automated Planning and Scheduling}, 28(1):259--263, Jun. 2018.

\bibitem[\protect\citeauthoryear{Theocharous and Kaelbling}{2003}]{theo2003}
Georgios Theocharous and Leslie Kaelbling.
\newblock Approximate planning in {POMDPs} with macro-actions.
\newblock In S.~Thrun, L.~Saul, and B.~Sch\"{o}lkopf, editors, {\em Advances in Neural Information Processing Systems}, volume~16. MIT Press, 2003.

\bibitem[\protect\citeauthoryear{Todorov}{2006}]{todorov}
Emanuel Todorov.
\newblock Linearly-solvable {Markov} decision problems.
\newblock In B.~Sch\"{o}lkopf, J.~Platt, and T.~Hoffman, editors, {\em Advances in Neural Information Processing Systems}, volume~19. MIT Press, 2006.

\bibitem[\protect\citeauthoryear{Todorov}{2009a}]{todorov09}
Emmanuel Todorov.
\newblock Efficient computation of optimal actions.
\newblock {\em Proceedings of the National Academy of Sciences}, 106(28):11478--11483, 2009.

\bibitem[\protect\citeauthoryear{Todorov}{2009b}]{todorov09a}
Emmanuel Todorov.
\newblock Eigenfunction approximation methods for linearly-solvable optimal control problems.
\newblock In {\em 2009 IEEE Symposium on Adaptive Dynamic Programming and Reinforcement Learning}, pages 161--168. IEEE, 2009.

\bibitem[\protect\citeauthoryear{Wang \bgroup \em et al.\egroup }{2018}]{wang1}
Erli Wang, Hanna Kurniawati, and Dirk Kroese.
\newblock An on-line planner for {POMDPs} with large discrete action space: a quantile-based approach.
\newblock In {\em Proceedings of the 28th International Conference on Aut. Plan. Sched.}, pages 273--77, Palo Alto, CA, 2018. AAAI Press.

\end{thebibliography}

\end{document}